\newcommand{\cmmnt}[1]{}
\newcommand{\gsemo}{GSEMO\xspace}
\newcommand{\gsemotD}{GSEMO3D\xspace}
\newcommand{\swgsemotD}{SW-GSEMO3D\xspace}
\newcommand{\fgsemotD}{Fast SW-GSEMO3D\xspace}
\newcommand{\pmax}{P_{\max}}
\newcommand{\ie}{i.\,e.\xspace}
\newcommand{\R}{\mathds{R}}
\newtheorem{lemma}{Lemma}
\newtheorem{theorem}{Theorem}
\newcommand{\eg}{e.\,g.\xspace}
\newcommand{\temax}{t_{\mathrm{max}}}
\newcommand{\cemax}{c_{\mathrm{max}}}
\newcommand{\tefrac}{t_{\mathrm{frac}}}
\newcommand{\mumin}{\mu_{\min}}
\newcommand{\cmax}{c_{\max}}
\newcommand{\std}{\mathrm{std}}
\newcommand{\N}{\mathbb{N}}
  \providecommand\BibTeX{{%
    \normalfont B\kern-0.5em{\scshape i\kern-0.25em b}\kern-0.8em\TeX}}}
\title{Sliding Window 3-Objective Pareto Optimization for Problems with Chance Constraints}
    \author{Frank Neumann\\
Optimisation and Logistics\\
School of Computer and Mathematical Sciences\\
The University of Adelaide\\
Adelaide, Australia
\And
Carsten Witt\\
Algorithms, Logic and Graphs\\
DTU Compute\\ Technical University of Denmark\\
2800 Kgs. Lyngby Denmark
}
\begin{document}

\maketitle              
\begin{abstract}
Constrained single-objective problems have been frequently tackled by evolutionary multi-objective algorithms where the constraint is relaxed into an additional objective. Recently, it has been shown that Pareto optimization approaches using bi-objective models can be significantly sped up using sliding windows~\cite{NeumannWittECAI23}. In this paper, we extend the sliding window approach to $3$-objective formulations for tackling chance constrained problems. On the theoretical side, we show that our new sliding window approach improves previous runtime bounds obtained in \cite{NeumannWittGECCO23} while maintaining the same approximation guarantees. Our experimental investigations for the chance constrained dominating set problem show that our new sliding window approach allows one to solve much larger instances in a much more efficient way than the 3-objective approach presented in \cite{NeumannWittGECCO23}.

\keywords{chance constraints, evolutionary algorithms, multi-objective optimization}
\end{abstract}
\section{Introduction}
Multi-objective formulations have been widely used to solve single-objective optimization problems. The initial study carried out by Knowles et al.~\cite{DBLP:conf/emo/KnowlesWC01} for the H-IFF and the traveling salesperson problem shows that such formulations can significantly reduce the number of local optima in the search space and uses the term \emph{multi-objectivization} for such approaches.
Using multi-objective formulations to solve constrained single-objective optimization problems by evolutionary multi-objective optimization using the constraint as an additional objective has shown to be highly beneficial for a wide range of problems~\cite{DBLP:journals/nc/NeumannW06,DBLP:journals/ec/FriedrichHHNW10,DBLP:journals/algorithmica/KratschN13}.
Using the constraint as an additional objective for such problems allows simple evolutionary multi-objective algorithms such as GSEMO mimic a greedy behaviour and as a consequence allows us to achieve theoretically best possible performance guarantees for a wide range of constrained submodular optimization problems~\cite{DBLP:conf/nips/QianYZ15,DBLP:conf/ijcai/QianSYT17,DBLP:journals/ai/RoostapourNNF22}. Such approaches have been widely studied recently under the term \emph{Pareto optimization} in the artificial intelligence and machine learning literature~\cite{DBLP:books/sp/ZhouYQ19}.

In the context of problems with stochastic constraints, it has recently been shown that $3$-objective formulations where the given constraint is relaxed into a third objective lead to better performance than 2-objective formulations that optimize the expected value and variance of the given stochastic components under the given constraint~\cite{NeumannWittGECCO23,DBLP:conf/ijcai/0001W22}. 
The experimental investigations for the chance constrained dominating set problem carried out in \cite{NeumannWittGECCO23} show that the 3-objective approach is beneficial and outperforms the bi-objective one introduced in \cite{DBLP:conf/ijcai/0001W22} for medium size instances of the problem. However, it has difficulties in computing even a feasible solution for larger graphs.
In order to deal with large problem instances we design a new $3$-objective Pareto optimization approach based on the sliding window technique for Pareto optimization recently introduced in~\cite{NeumannWittECAI23}. Using sliding window selection has been shown to scale up the applicability of \gsemo type algorithms for the optimization of monotone functions under general cost constraints. 
Here, at a given time step only solutions with a fixed constraint value are chosen in the parent selection step. This allows the algorithm to proceed with achieving progress in the same way as the analysis for obtaining theoretical performance guarantees. 
It does so by dividing the given function evaluation budget $\temax$ equally among the different constraint values starting with selecting individuals with the smallest constraint value at the beginning and increasing it over time until it reaches that given constraint bound at the end of the run.
A positive effect is that the maximum population size can be eliminated as a crucial factor in the given runtime bounds. Furthermore, experimental studies carried out in \cite{NeumannWittECAI23}  show that the approach provides major benefits when solving problems on graphs with up to 21,000 vertices.

Making the sliding window technique work requires one to deal with a potentially large number of trade-off solutions even for a small number of constraint values. We design highly effective $3$-objective Pareto optimization approaches based on the sliding window technique. Our theoretical investigations using runtime analysis for the chance constrained problem under a uniform constraint show that our approach may lead to a significant speed-up in obtaining all required Pareto optimal solutions. In order to make the approach even more suitable in practice, we introduce additional techniques that do not hinder the theoretical performance guarantees, but provide additional benefits in applications. One important technique is to control the sliding window through an additional parameter~$a$. Choosing $a \in \mathopen{]}0,1\mathclose{[}$ allows the algorithm to move the sliding window faster at the beginning of the optimization process and slow down when approaching the constraint bound. This allows us to maintain the benefit of the Pareto optimization approach including its theoretical performance guarantees while focusing more on the improvement of already high quality solutions at the end of the optimization run. The second technique that we incorporate is especially important for problems like the dominating set problem where a constraint that is not fulfilled for most of the optimization process needs to be fulfilled at the end. In order to deal with this, we introduce a parameter $\tefrac \in [0,1]$ which determines the fraction of time our sliding window technique is used. If after $\tefrac \cdot \temax$ steps a feasible solution has not been found yet, then in each step a solution from the population that is closest to feasible is selected in the parent selection step to achieve feasibility within the last $(1-\tefrac) \cdot \temax$ steps.

This paper is structured as follows: in Section~\ref{sec:algorithms}, we present the
multi-objective algorithms considered in this paper, 
in particular the 3-objective approach using sliding 
window selection. Section~\ref{sec:runtime} proves 
the improved runtime bounds for this 
approach. Section~\ref{sec:experiments} presents 
the empirical comparison of the different algorithms 
on a large set of instances of the minimum 
dominating set problem. We finish with some conclusions.

\section{Algorithms}
\label{sec:algorithms}
In this section, we define the algorithmic framework incorporating sliding window selection into two-objective optimization problems under constraints. It combines the  $3$-objective problem formulation from \cite{NeumannWittGECCO23}, where the underlying problem is $2$-objective and a constraint is converted to a helper objective, with the two-objective formulation  from \cite{NeumannWittECAI23},  where the problem is single-objective and the constraint is converted to a helper objective and additionally undergoes the so-called sliding window selection. More precisely, sliding window is based on the observation that several problems under uniform 
constraints can be solved 
by iterating over increasing constraint values and optimizing the actual objectives for each fixed constraint value.

We consider an optimization problem on bit strings $x\in\{0,1\}^n$ involving two objective
functions 
$\mu(x),v(x)\colon\{0,1\}^n \to \R^{+}_0$ and an integer-valued
constraint function $c(x)\colon \{0,1\}^n \rightarrow \N$ with bound~$B$, \ie, the only solutions satisfying $c(x)\le B$ are feasible. 
Our new approach called \swgsemotD is shown 
in Algorithm~\ref{alg:GSEMO-sliding} (which will later be extended to 
Algorithm~\ref{alg:FGSEMO-sliding} explained below). 
The sliding window selection in Algorithm~\ref{alg:select-sliding} 
will be used as a module in \swgsemotD  and choose from its current population~$P$, which is the first parameter of the algorithm. The idea is to select only from a subpopulation of constraint values in a specific interval determined 
by the maximum constrained value~$B$, the current generation~$t$, the maximum number of iterations of the algorithm $\temax$, and further parameters. In the simplest case (where the remaining parameters 
are set to~$a=1$,  $\cmax=-1$ and  $\tefrac=1$), the time interval $[1,\temax]$ is uniformly divided into $B$ time intervals in which only the subpopulation having constraint values in the interval $[\lfloor(t/\temax)B\rfloor-\std,  \lfloor(t/\temax)B\rfloor+\std]$, 
where $\std\ge 0$ is a deviation that allows selection from a larger interval, which is another 
heuristic component investigated in Section~\ref{sec:experiments}. Moreover, as not all problems may benefit from selecting according to the specific
interval order, the calls to Algorithm~\ref{alg:select-sliding} resort to selection from the interval $[B-\std,B]$ for the last $(1-\tefrac)\temax$ steps. Finally, since making progress may become increasingly difficult for increasing constraint values, the selection 
provides the parameter~$a$ which will allow time intervals of varying length for the different constraint values to choose from. If $a<1$, the time allocated to choosing from a specific constraint value (interval) increases with the constraint value. 
Lines~8--10 of the algorithm make sure that solutions 
with too low constraint value (less than~$\ell$), but not equaling the parameter 
$\cmax$ are permanently 
removed from the population. Line~$11$ confines the population to 
select from to the desired window of constraint values $[\ell,h]$. 
In case that no solution of those values exists, a uniform choice from the population 
remaining after removal of individuals of 
too low constraint values is made. 
Hence, even if there are no individuals
with constraint values in the interval
$[\ell,h]$, then lines~8--10 favor increasing constraint values.

\begin{algorithm}[t]
Choose initial solution $x \in \{0,1\}^n$\;
Set $t_0 \leftarrow -1$\;
 $P\leftarrow \{x\}$\;
 Compute $f(x) = (\mu(x), v(x), c(x))$\;
 $t \leftarrow 1$\;
 $\mu_{\min} \leftarrow \mu(x)$\;
 \If{$\mu_{\min}=0$}{
 $t_0 \leftarrow t$\;
 }
\Repeat{$\mathit{t\geq \temax}$}{
\If{$(t_0 = -1) \wedge ( t \leq  \temax)$}{
$x  \leftarrow \arg \min\{\mu(z) \mid z \in P\}$ (breaking ties arbitrarily)
}
\Else{ $x=\text{sliding-selection}(P, t-t_0, \temax-t_0, 0, B,1, 1, -1)$\;
}
Create $y$ from $x$ by mutation\;
 Compute $f(y) = (\mu(y), v(y), c(y))$\;
 \If{$\mu(y)<\mu_{\min}$}
 {$\mu_{\min} \leftarrow \mu(y)$\;
 
 }
\If{$(t_0=-1) \wedge (\mu_{\min}=0)$}{
$t_0 \leftarrow t$\;
}
 
\If{$\nexists\, w \in P: w \succ y$} {
  $P \leftarrow (P \setminus \{z\in P \mid y \succeq z\}) \cup \{y\}$\;
  }
  $t\leftarrow t+1$\;
  }
\caption{Sliding Window GSEMO3D (SW-GSEMO3D)} \label{alg:GSEMO-sliding}
\end{algorithm}
In our theoretical studies, we focus on  \swgsemotD which uses sliding window selection  with the default choices~$\std=0$, $\tefrac=1$, $a=1$ and $\cmax=-1$. 
It starts out with a solution chosen 
uniformly at random 
and is run on the bi-objective optimization problem $(\mu(x),v(x)$), both of which are minimized and will correspond to expected value and variance of a chanced constrained optimization problem further explained in Section~\ref{sec:runtime}. In particular, we assume $\mu(x)\ge 0$ for 
all $x\in\{0,1\}^n$ and $\mu(0^n)=0$, and accordingly for $v(x)$.
Following the usual definitions in multi-objective optimization, 
we say that a solution $x$ dominates a 
solution $y$ ($x \succeq y$) iff $c(x) \geq c(y) \wedge \mu(x) \leq \mu(y) \wedge v(x) \leq v(y)$. Furthermore, we say a solution 
$x$ strongly dominates $y$ ($x \succ y$) iff $x \succeq y$ and $(\mu(x),v(x),c(x)) \not = (\mu(x),v(x),c(x))$.

\begin{algorithm}[t]
$y \leftarrow x$\;
\Repeat{$x \not =y$}{
Create $y$ from $x$ by flipping each bit $x_{i}$ of $x$ with probability $\frac{1}{n}$.
}
Return $y$\;
\caption{Standard-bit-mutation-plus(x)} \label{alg:mutation-plus}
\end{algorithm}

\begin{algorithm}[t]
$\hat{c} \leftarrow (t^a/(\tefrac \cdot \temax)^a)\cdot B$\;
\If{$t \leq (\tefrac \cdot \temax)$}{
$\ell = \lfloor\hat{c} \rfloor-std$\;
$h = \lceil \hat{c} \rceil +std$\;
}
\Else{
$\ell = B- std$\;
$h = B$\;
}
\For{$x \in P$}{
\If{$(c(x) < \ell) \wedge (c(x)\not =\cemax) \wedge (\cemax\not= -1) \wedge (|P|>1)$}{
$P \leftarrow P \setminus \{x\}$; 
}
}
$\hat{P} =\{x \in P \mid \ell \leq c(x) \leq h \}$\;
\If{$\hat{P}=\emptyset$}{$\hat{P} \leftarrow P$\;}
Choose $x\in \hat{P}$ uniformly at random\;
Return $x$\;
\caption{sliding-selection$(P, t, \temax, std, B, \tefrac, a, \cemax)$} \label{alg:select-sliding}
\end{algorithm}

The \swgsemotD starts out with a solution~$x\in\{0,1\}^n$ chosen by the user, \eg, as the all-zeros string or uniformly at random. It works in two phases. As long as the minimum $\mu$-value of the population called $\mumin$ is positive, 
it chooses a solution of this smallest $\mu$-value, applies mutation, usually standard bit mutation avoiding duplicates (Algorithm~\ref{alg:mutation-plus}), and accepts the offspring into the population if it is not strictly dominated by another 
member of the population.
All 
individuals that are weakly dominated by the offspring are then removed 
from the population. In any case, the 
current population always consists of mutually non-dominating solutions only. 
From the point of time~$t_0$ on where a solution~$x$ satisfying $\mu(x)=0$ is found for the first time, the 
algorithm chooses from the population using sliding window selection (see Algorithm~\ref{alg:select-sliding}) for the remaining  $\temax-t_0$ steps. In Algorithm~\ref{alg:select-sliding}, the choice $\cemax=-1$ implies 
that lines 8--10 do nothing.

Algorithm~\ref{alg:FGSEMO-sliding} called \fgsemotD extends Algorithm~\ref{alg:GSEMO-sliding} with heuristic elements as follows. First of all, sliding window selection is called with user-specified choices of 
$\std$, $\tefrac$ and~$a$ as defined above. 
Moreover, it keeps track of the maximum constraint value~$c_{\max}$ found in the population (lines 24--25), uses that in the sliding window selection
and introduces a margin parameter~$\epsilon$ such that 
sliding window selection is only run until $c_{\max}=B-\epsilon$. Afterwards, \ie, when the algorithm is close to the constraint boundary, making further progress in the constraint value may be too difficult for sliding window selection. Therefore, 
for the last $(1-\tefrac)\temax$ steps,  the algorithm chooses  an individual of maximum constraint value if $c_{\max}<B-\epsilon$ holds. 
 These heuristic elements underlying the parameters $\std$, $\tefrac$, $a$ and $\epsilon$ and the use of $\cmax$ in 
the sliding window selection will show 
some empirical benefit in Section~\ref{sec:experiments}.

For the sake of completeness, we also define \gsemo, a classical multi-objective optimization algorithm \cite{GielCEC2003,DBLP:journals/tec/LaumannsTZ04} that has inspired the developments of 
Algorithms~\ref{alg:GSEMO-sliding} and~\ref{alg:FGSEMO-sliding} and serves as a baseline in our experiments. It maintains a population of
non-dominating solutions of unbounded size, starting from a solution chosen 
uniformly at random, and creates one offspring per generation by choosing an individual uniformly at random, applying standard bit mutation avoiding duplicates, and accepting the offspring if it is not 
dominated by any member of the population. Depending on the 
number of objectives used in the
experiments in Section~\ref{sec:experiments}, we will consider specific 
instances of the 
algorithm called GSEMO2D and GSEMO3D as 
in \cite{NeumannWittGECCO23}.

\begin{algorithm}[t]
Choose initial solution $x \in \{0,1\}^n$\;
$t_0 \leftarrow -1$, $t \leftarrow 1$, $\mu_{\min} \leftarrow \mu(x)$, $\cemax \leftarrow -1$\;
 $P\leftarrow \{x\}$\;
 Compute $f(x) = (\mu(x), v(x), c(x))$\;
 \If{$(c(x)> c_{\max}) \wedge (c(x) \leq B)$}
 {$c_{\max} \leftarrow   c(x)$\;
 }
 \If{$\mu_{\min}=0$}{
 $t_0 \leftarrow t$\;
 }
\Repeat{$\mathit{t \geq \temax}$}{
$t \leftarrow t+1$\;

\If{$(t_0 = -1) \wedge ( t \leq \tefrac \cdot \temax)$}{
$x  \leftarrow \arg \min\{\mu(z) \mid z \in P\}$
}
\Else{ \If{$( t > \tefrac \cdot \temax) \wedge (c_{\max}<B - \epsilon)$ 
}{
$x  \leftarrow \arg \max\{c(z) \mid z \in P\}$ 
}

\Else
{
$x=\text{sliding-selection}(P, t-t_0, \temax-t_0, std, B, \tefrac,a, \cemax)$\;
}
}
Create $y$ from $x$ by mutation\;
 Compute $f(y) = (\mu(y), v(y), c(y))$\;
 \If{$\mu(y)<\mu_{\min}$}
 {$\mu_{\min} \leftarrow  \mu(y)$\;
 }
 \If{$(t_0=-1) \wedge (\mu_{\min}=0)$}{
$t_0 \leftarrow t$\;
}
 
 \If{$(c(y)> c_{\max}) \wedge (c(y) \leq B)$}
 {$c_{\max} \leftarrow   c(y)$\;
 }
 
\If{$\nexists\, w \in P: w \succ y$} {
  $P \leftarrow (P \setminus \{z\in P \mid y \succeq z\}) \cup \{y\}$\;
  }
  }
\caption{Fast Sliding-Window GSEMO3D (Fast SW-GSEMO3D) (Parameters: $\temax, \tefrac, std, a, \epsilon$)} \label{alg:FGSEMO-sliding}
\end{algorithm}

\begin{algorithm}[h]
Choose initial solution $x \in \{0,1\}^n$\;
 $P\leftarrow \{x\}$\;
\Repeat{$\mathit{stop}$}{
Choose $x\in P$ uniformly at random\;
Create $y$ from $x$ by mutation\;
\If{$\nexists\, w \in P: w \succ y$} {
  $P \leftarrow (P \setminus \{z\in P \mid y \succeq z\}) \cup \{y\}$\;
  } }
\caption{Global simple evolutionary multi-objective optimizer (GSEMO)} \label{alg:GSEMO}
\end{algorithm}

\section{Runtime Analysis of 3D Sliding Window Algorithm}
\label{sec:runtime}

In our theoretical study, we 
consider the chance constrained problem investigated in \cite{DBLP:conf/ijcai/0001W22} using rigorous runtime analysis, which is a major direction in the area of theory of evolutionary computation~\cite{NeumannW10,Jansen13,DoerrN20}. 
Given a set of $n$ items $I=\{e_1, \dots, e_n\}$ with stochastic weights $w_i$, $1 \leq i \leq n$, we want to solve 
\begin{equation}
\min W  \text{~~~~subject to~~~~}  (\mathit{Pr}( w(x) \leq W) \geq \alpha) \wedge (|x|_1 \geq k),
\label{chance-problem}
\end{equation}
where $w(x) = \sum_{i=1}^n w_i x_i$, $x \in \{0,1\}^n$, and $\alpha \in \mathopen{[}1/2,1\mathclose{[}$.
The weights $w_i$ are independent 
random variables  following a normal distribution $N(\mu_i, \sigma_i^2)$, $1 \leq i \leq n$, where $\mu_i \geq 1$ and $\sigma_i\geq 1$, $1 \leq i \leq n$.
We denote by $\mu(x) = \sum_{i=1}^n \mu_i x_i$ the expected weight and by $v(x) = \sum_{i=1}^n \sigma_i^2 x_i$ the variance of the weight of solution $x$.

As stated in \cite{DBLP:conf/ijcai/0001W22}, the problem given in Equation~\eqref{chance-problem} is equivalent to minimizing 
\begin{equation}
\label{eq:weight}
\hat{w}(x)=\mu(x) + K_{\alpha} \sqrt{v(x)},
\end{equation}
under the constraint that $|x|_1 \geq k$ holds. 
Here, $K_{\alpha}$ denotes the $\alpha$-fractional point of the standard Normal distribution.

Our algorithm can also be used to maximize a given deterministic objective $c(x)$ under a given chance constraint, i.\,e., 
\begin{equation}
\max c(x)  \text{~~~~subject to~~~~}  \mathit{Pr}( w(x) \leq B) \geq \alpha.
\label{chance-problem2}
\end{equation}
with $w(x) = \sum_{i=1}^n w_i x_i$ where each $w_i$ is chosen independently of the other according to a Normal distribution $N(\mu_i, \sigma_i^2)$, and $B$ and $\alpha \in [1/2, 1[$ are a given weight bound and reliability probability.
Such a problem formulation includes for example the maximum coverage problem in graphs with so-called chance constraints~\cite{DBLP:conf/aaai/DoerrD0NS20,DBLP:conf/ppsn/NeumannN20}, where $c(x)$ denotes the nodes  covered by a given solution $x$ and the costs are stochastic. Furthermore, the chance constrained knapsack problem as investigated in \cite{DBLP:conf/gecco/XieN020,DBLP:conf/gecco/XieHAN019} fits into this problem formulation.

In \cite{NeumannWittGECCO23}, 
the 3-objective formulation of chance-constrained optimization
problems under a uniform constraint 
given in~\eqref{chance-problem} was proposed. 
Let  
$
f_{3D}(x)= ( \mu(x), v(x), c(x)),
$
where $\mu(x)$ and $v(x)$ are the expected weight and variance, respectively, 
as above, and $c(x)$ is the constraint value of a given solution that should be maximized. In our theoretical study, we 
focus on the case $c(x) = |x|_1$, 
which 
turns the constraint $|x|_1 \geq k$ into the additional objective of maximizing the number of bits in the given bitstring. This 3-objective formulation 
was introduced as an alternative model to  the bi-objective model 
from \cite{DBLP:conf/ijcai/0001W22}, which considers penalty terms 
for violating the constraint $|x|_1 \geq k$. 

Based on the ideas for the 3-objective GSEMO from \cite{NeumannWittGECCO23}, we formulate the following result for \swgsemotD (Algorithm~\ref{alg:GSEMO-sliding}). The analysis 
is additionally 
inspired by\cite{NeumannWittECAI23}, where  a bi-objective sliding windows approach for submodular optimization was analyzed. Our theorem assumes 
an initialization with the all-zeros string. If uniform initialization is used, \swgsemotD nevertheless reaches the all-zeros string  efficiently, as 
shown in a subsequent lemma (Lemma~\ref{lem:swgsmtotduniform}).

The following theorem is based on the maximum population size $\pmax^{(i)}$ observed in any of the sliding window intervals. Note that when running the algorithm, the runtime for a given sliding window  can be adapted to $t^{(i)}_{\max}= \pmax^{(i)} n \ln n$ during the run based on the observed value of $\pmax^{(i)}$ in order to guarantee the stated approximation result. Note that 
the previous result from 
\cite{NeumannWittGECCO23} showed 
an upper bound of $O(n^2\pmax)$, 
where $\pmax$ is the overall 
maximum population size observed 
in the run of the algorithm. 
If the largest $\pmax^{(i)}$ 
is significantly smaller than $\pmax$, 
the following theorem gives a significantly
stronger upper bound.

\begin{theorem}
\label{theo:swgsemotdzeros}
Let $\pmax^{(i)}$ denote the largest number of individuals with constraint value~$i$ present in the population at all points in time where \swgsemotD can select such individuals, let $t^{(i)}_{\max} = \pmax^{(i)} n \ln n$ and let $\temax = 4en\max_{i=0}^{n-1}  t^{(i)}_{\max} $. 
    Then \swgsemotD, initialized with~$0^n$, computes a population which includes an optimal solution for the problem given in Equation~\eqref{chance-problem} (for any choice of~$k\in\{0,\dots,n\}$ and~$\alpha\in [1/2,1\mathclose{[}$) and Equation~\eqref{chance-problem2} (with $c(x)=|x|_1$ for any choice of $B\in\{0,\dots,n\}$ and $\alpha\in [1/2,1\mathclose{[}$) 
    until time $\temax = O(\max_{i=0}^{n-1}\{\pmax^{(i)}\} \cdot n^2\log n)$  with probability $1 - o(1)$.
\end{theorem}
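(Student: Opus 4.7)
The plan is to combine the classical GSEMO fitness-level argument with the observation that during each sliding window interval the parent is forced to be drawn from a bucket of bounded size, which lets the $\pmax^{(i)}$ quantities replace the overall $\pmax$ in the runtime estimate.

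First I would fix the notation. Initialization with~$0^n$ gives $\mu_{\min}=0$ already at $t=1$, so $t_0=1$ and Algorithm~\ref{alg:select-sliding} is invoked for essentially the entire run. With the defaults $\std=0$, $a=1$, $\tefrac=1$ and $B=n$, at step~$t$ the returned window $[\ell,h]$ lies inside $\{i,i{+}1\}$ with $i=\lfloor (t/\temax)n\rfloor$. I would then partition the run into $n$ phases, phase~$i$ consisting of the at least $\temax/n = 4e\max_j t^{(j)}_{\max}\ge 4e\,\pmax^{(i)} n\ln n$ consecutive generations whose window is contained in $\{i,i{+}1\}$.

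Next I would fix a witness chain to an optimum. Let $x^\ast$ be an optimal solution of the chance-constrained problem~\eqref{chance-problem} (the argument for~\eqref{chance-problem2} is analogous), write $c^\ast=c(x^\ast)\le n$, enumerate the 1-bits of $x^\ast$ in any order, and define $x_0=0^n, x_1,\dots,x_{c^\ast}=x^\ast$ by switching on one of these bits at a time. I would then establish by induction the following invariant: \emph{at the start of phase~$i$, the population contains some $\tilde x_i$ with $c(\tilde x_i)=i$, $\mu(\tilde x_i)\le\mu(x_i)$ and $v(\tilde x_i)\le v(x_i)$.} The base case holds trivially with $\tilde x_0=0^n$. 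For the inductive step, during phase~$i$ the subpopulation at levels $\{i,i{+}1\}$ has size at most $\pmax^{(i)}+\pmax^{(i+1)}\le 2\max_j\pmax^{(j)}$, so $\tilde x_i$ (or any individual that has in the meantime weakly Pareto-replaced it at level~$i$, and hence has $\mu,v$ no larger) is chosen with probability at least $1/(2\max_j\pmax^{(j)})$ per generation. Flipping exactly the distinguished 0-bit via standard bit mutation has probability at least $(1/n)(1-1/n)^{n-1}\ge 1/(en)$ and produces an accepted offspring witnessing the invariant at level~$i+1$. The per-generation success probability is therefore at least $1/(2en\max_j\pmax^{(j)})$, so the probability of failure throughout phase~$i$ is at most $\bigl(1-1/(2en\max_j\pmax^{(j)})\bigr)^{4e\max_j\pmax^{(j)}n\ln n}\le e^{-2\ln n}=n^{-2}$.

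A union bound over the $c^\ast\le n$ phases bounds the overall failure probability by $n^{-1}=o(1)$. Conditional on success, at time~$\temax$ the population contains an individual weakly Pareto-dominating $x^\ast$, which must itself be optimal for both problem formulations (the dominator is feasible and has objective value no worse than $x^\ast$). The main obstacle I anticipate lies in the inductive step: an individual of strictly higher constraint value could Pareto-replace $\tilde x_i$ during phase~$i$, and sliding-window selection cannot retrieve such an individual in that same phase. I would deal with this by strengthening the invariant to: \emph{either the population contains a valid $\tilde x_i$, or it already contains an individual Pareto-dominating some later chain element $x_j$ with $j>i$.} The second disjunct is preserved under arbitrary further Pareto replacements, and the probability estimate above can be re-applied starting from a later phase with the same $n^{-2}$ failure bound. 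The remaining technicalities---the single boundary generation at which $\hat c$ is an integer and the initial transient before the first mutation---form a negligible fraction of each phase and do not affect the asymptotic bound.
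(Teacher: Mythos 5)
There is a genuine gap, in fact two. First, your proof establishes the claim only for one fixed pair $(k,\alpha)$ with its single optimum $x^\ast$, whereas the theorem requires the final population to contain optimal solutions for \emph{all} $k\in\{0,\dots,n\}$ and \emph{all} $\alpha$ in the continuum $[1/2,1\mathclose{[}$ simultaneously. The paper's essential device for this is missing from your argument: it reduces the continuum of $\alpha$-values to the at most $n^2$ breakpoints $\lambda_{i,j}$ of the weightings $f_{\lambda}(x)=\lambda\mu(x)+(1-\lambda)v(x)$ (following \cite{DBLP:conf/ijcai/0001W22,DBLP:journals/dam/IshiiSNN81}), and tracks, for each midpoint $\lambda_i^\ast$, the chain of greedy optima over all cardinalities $k$. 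Without this reduction you cannot even enumerate the targets for a union bound; and even granting it, your per-phase failure bound of $n^{-2}$ (coming from the selection probability $1/(2\max_j\pmax^{(j)})$ against a phase length of $4e\max_j\pmax^{(j)}n\ln n$) is too weak to survive a union bound over the resulting $O(n^3)$ phase-events. The paper arranges the constants so that each event fails with probability at most $n^{-4}$, which is what makes the final $O(1/n)$ bound work.

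Second, your inductive step is not sound for an arbitrary bit-ordering of an arbitrary optimum. The witness $\tilde x_i$ that dominates $x_i$ is a different bit string from $x_i$, so the ``distinguished 0-bit'' taking $x_i$ to $x_{i+1}$ may already be set to~$1$ in $\tilde x_i$; flipping some other bit that is $0$ in $\tilde x_i$ but $1$ in $x_{i+1}$ adds an element whose $\mu$- and $\sigma^2$-contributions need not be bounded by those of the distinguished element, so domination of $x_{i+1}$ is not preserved. Your strengthened invariant (jumping to a later chain element) addresses Pareto replacement by higher constraint values but not this bit-collision problem. The paper avoids it structurally: its chain elements are the greedy optima of $f_{\lambda_i^\ast}$, so any population member realizing the optimal objective vector at level~$k$ consists of $k$ elements of smallest $f_{\lambda_i^\ast}$-value, and there is always an unused element of value $f_{\lambda_i^\ast}(e_{k+1})$ whose insertion yields the level-$(k+1)$ optimum. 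If you want to keep a chain argument, you would need to tie your chain to these greedy orderings rather than to an arbitrary enumeration of the $1$-bits of $x^\ast$.
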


\begin{proof}
Let $X^k = \{x \in \{0,1\}^n \mid |x|_1=k\}$ be the set of all solutions having exactly $k$ elements. 
We show the following more technical statement~$S$: \emph{the population~$P$ at time 
$\temax$ will, with the probability bound claimed in the theorem, contain for each 
$\alpha\in[1/2,1\mathclose{[}$ and $k\in\{0,\dots,n\}$ a solution 
\begin{equation}
x_{\alpha}^k = \arg \min_{x \in X^k} \left\{ \mu(x) + K_{\alpha} \sqrt{v(x)} \right\},\label{eq:optimality-included} 
\end{equation}
\ie, 
$P \supseteq \{x^k_{\alpha} \mid 0 \leq k \leq n, \alpha \in [1/2, 1\mathclose{[}\}$.} 
By Theorem~4.3 in \cite{NeumannWittGECCO23}, such a population 
contains the optimal solutions for any
choice of $\alpha\in[1/2,1\mathclose{[}$.
Note that not the whole set of Pareto optimal solutions is necessarily required.

To show statement~$S$, we re-use the following definitions from 
 \cite{DBLP:conf/ijcai/0001W22}. Let $\lambda_{i,j} = \frac{\sigma_j^2 - \sigma_i^2}{(\mu_i-\mu_j) +(\sigma^2_j - \sigma^2_i)}$ for the pair of elements $e_i$ and $e_j$ of the given input where $\sigma^2_i < \sigma^2_j$ and $\mu_i > \mu_j$ holds, $1 \leq i < j \leq n$. 
The set $\Lambda=\{\lambda_0, \lambda_1, \ldots, \lambda_{\ell}, \lambda_{\ell+1}\}$ where $\lambda_1, \ldots, \lambda_{\ell}$ are the values $\lambda_{i,j}$ in increasing order and 
$\lambda_0=0$ and $\lambda_{\ell+1}=1$.  Moreover, 
 we define the function
$
f_{\lambda}(x) = \lambda \mu(x) + (1-\lambda) v(x) $
and also use it applied to elements $e_i$, i.\,e. 
$
 f_{\lambda}(e_i) = \lambda \mu_i + (1-\lambda) \sigma_i^2 .
$

As noted in \cite{DBLP:conf/ijcai/0001W22}, 
for a given $\lambda$ and a given number~$k$ of elements to include, the function $f_{\lambda}$ can be optimized by a greedy approach which iteratively selects a set of $k$ smallest elements 
according to $f_{\lambda}(e_i)$.
For any $\lambda \in [0,1]$, an optimal solution for $f_{\lambda}$ with $k$ elements is Pareto optimal as there 
is no other solution with at least $k$ elements that improves the expected cost or variance without impairing the other. 
Hence, once obtained such a solution $x$, the resulting objective vector $f_{3D}(x)$ will remain in the population for the rest of the run of 
\swgsemotD. 
Furthermore, the set of optimal solutions for different $\lambda$ values only change at the $\lambda$ values of the set $\Lambda$ as these $\lambda$ values constitute the weighting where the order of items according to $f_{\lambda}$ can switch~\cite{DBLP:journals/dam/IshiiSNN81,DBLP:conf/ijcai/0001W22}. 

We consider a $\lambda_i \in \Lambda$ with $0 \leq i \leq \ell$. 
Similarly to \cite{DBLP:journals/dam/IshiiSNN81}, we
define $\lambda_i^* = (\lambda_i + \lambda_{i+1})/2$. The order 
of items according to the weighting of expected value and variance 
can only change at values $\lambda_i \in \Lambda$ and the resulting objective
vectors are not necessarily unique for values $\lambda_i \in \Lambda$.  
Choosing the $\lambda_i^*$-values in the defined way gives 
optimal solutions for all $\lambda \in [\lambda_i, \lambda_{i+1}]$ which 
means that we consider all orders of the items that can lead to optimal 
solutions when inserting the items greedily according to any fixed weighting 
of expected weights and variances.

In the following, we analyze  the time until an optimal solution with exactly $k$ elements has been produced for 
$f_{\lambda_i^*}(x) = \lambda_i^* \mu(x) + (1-\lambda_i^*) v(x) $
for any $k\in\{0,\dots,n\}$ and any $i\in\{0,\dots,\ell\}$. 
Note that these $\lambda_i^*$ values allow to obtain all optimal solutions for the set of functions $f_{\lambda}$, $\lambda \in [0,1]$.

For a given $i$, let the items be ordered such that $f_{\lambda_i^*}(e_1) \leq \dots \leq f_{\lambda_i^*}(e_k) \leq \dots \leq f_{\lambda_i^*}(e_n)$ holds.
An optimal solution for $k$ elements and $\lambda_i^*$ consists of $k$ elements with the smallest $f_{\lambda_i^*}(e_i)$-value. If there 
are more than one element with the value $f_{\lambda_i^*}(e_k)$ then reordering these elements does not change the objective vector or $f_{\lambda_i^*}$-value.

Note that for $k=0$ the search point $0^n$ is optimal for any $\lambda \in [0,1]$. 
Picking an optimal solution with $k$ elements for $f_{\lambda_i^*}$ and inserting an element with value $f_{\lambda_i^*}(e_{k+1})$ 
leads to an optimal solution for $f_{\lambda_i^*}$ with $k+1$ elements.
 We call such a step, picking the solution that is optimal for $f_{\lambda_i^*}$ with $k$ elements and inserting 
 an element with value $f_{\lambda_i^*}(e_{k+1})$, a success. Assuming such a solution is picked, the probability of inserting the 
 element is at least~$(1/n)(1-1/n)^{n-1}\ge 1/(en)$ 
 since it suffices 
 that \swgsemotD flips a specific bit and does not flip the rest.

     We now consider a sequence of events leading to the successes for 
     all values of~$k\in\{0,\dots,n-1\}$ and all $i\in\{0,\dots,\ell-1\}$. 
     We abbreviate $\pmax^*\coloneqq \max_{j=0}^{n-1} \pmax^{(j)}$. By the assumption from the theorem, 
    $0^n$, an optimal solution for~$k=0$, is in the population at 
    time~$0$. Assume that 
    optimal solutions with~$k$ elements all for $f_{\lambda_i^*}$, where $i\in\{0,\dots,\ell\}$, 
    are in the population~$P$ at time $\tau_k \coloneqq k\temax/n = 4e\pmax^* kn\ln n$. 
    
    Then, by definition of 
    the set $\hat{P}$ of \gsemotD, for any fixed~$i$, such a solution is available for selection 
    up to time \[\tau_{k+1}-1 = (k+1)\temax/n -1 = 4e\pmax^* (k+1)n\ln n-1\] since 
    $\lfloor ((k+1)\temax/n-1)/\temax)\cdot n \rfloor= k.$
    The size of the subset population that the algorithm selects from 
    during this period has been denoted by $\pmax^{(k)}$. 
    Therefore, the probability of a success at any fixed value~$k$ and~$i$ 
    is at least $1/(\pmax^{(k)}en)$ from time~$\tau_k$ until time $\tau_{k+1}-1$, \ie, for a period 
    of $4e \pmax^{*} n\ln n \ge 4e \pmax^{(k)} n\ln n
    $ steps, and the probability 
    of this not happening is at most 
    \[\left(1-\frac{1}{\pmax^{(k)} en}\right)^{4e\pmax^{(k)} n\ln n}\le \frac{1}{n^4}.\]

    The number~$\ell$ of different values of $\lambda_i^*$ is at most the number of pairs of elements and therefore at most $n^2$. By a union bound over this number of values and all~$k$, the probability to have not obtained all optimal solutions for all $f_{\lambda_i^*}$, where $i\in\{0,\dots,\ell\}$,  and all 
    values of~$k\in\{0,\dots,B\}$ by time~$\temax$  is $O(1/n)$. This shows the result for Equation~\eqref{chance-problem}.  
    The result for~\eqref{chance-problem2} follows from the proof of \cite[Theorem~4.3]{NeumannWittGECCO23}. 
\end{proof}

Finally, as mentioned above, we consider a uniform choice of the initial individual of \swgsemotD 
and show that the time to reach the all-zeros string is bounded by $O(n\log n)$ if the largest 
possible expected value $\mu_{\max}\coloneqq \sum_{i=1}^n \mu_i$ of an individual 
is polynomially bounded. Hence, this constitutes a lower-order term in terms of the optimization 
time bound proved in Theorem~\ref{theo:swgsemotdzeros} above. Even if $\mu_{\max}$ is exponential like $2^{n^c}$ for a constant~$c$, the bound of the lemma 
is still polynomial.

\begin{lemma}
    \label{lem:swgsmtotduniform}
Consider \swgsemotD initialized with a random bit string. Then the expected time until its population includes the all-zeros string for the first time is 
bounded from above by $O(n(\log \mu_{\max}+1))$. 
\end{lemma}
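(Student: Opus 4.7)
The plan is to apply the multiplicative drift theorem to the potential $\phi_t \coloneqq \mu(x^*_t)$, where $x^*_t \in \arg\min_{z\in P_t}\mu(z)$ denotes a minimum-expectation individual in the current population. Because $\mu_i \geq 1$ for all $i$, we have $\mu(x) = 0$ iff $x = 0^n$, so the first time $\phi_t$ hits zero coincides with the first time $0^n$ enters $P$. Throughout this entire period (the ``$t_0 = -1$'' branch of Algorithm~\ref{alg:GSEMO-sliding}), \swgsemotD selects $x = x^*_t$ in every iteration, so the dynamics of $\phi_t$ are the only thing that matters; the sliding-window selection is never invoked before $0^n$ is found.

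First I would argue that $\phi_t$ is non-increasing and establish a drift lower bound. If mutation of $x^*_t$ produces an offspring $y$ with $\mu(y) < \phi_t$, then every $w \in P_t$ satisfies $\mu(w) \geq \phi_t > \mu(y)$, so $w$ cannot strictly dominate $y$; hence $y$ is accepted and $\phi_{t+1} \leq \mu(y) < \phi_t$. Standard bit mutation (as in Algorithm~\ref{alg:mutation-plus}) produces any specific one-bit neighbour of $x^*_t$ with probability at least $(1/n)(1-1/n)^{n-1} \geq 1/(en)$, and the retry against duplicates only conditions on a subset of outcomes, which can only increase this probability. Flipping the single $1$-bit at position $i$ of $x^*_t$ decreases $\mu$ by exactly $\mu_i$, so summing over the $1$-bits of $x^*_t$ yields
\[
\expect{\phi_t - \phi_{t+1} \mid \phi_t = d} \;\geq\; \sum_{i:\, x^*_{t,i}=1} \frac{\mu_i}{en} \;=\; \frac{d}{en}.
\]

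With multiplicative drift of rate $\delta = 1/(en)$, using $\phi_0 \leq \mu_{\max}$ for any initial bit string together with the observation that every positive value of $\phi_t$ is at least $\min_i \mu_i \geq 1$, the multiplicative drift theorem yields an expected hitting time of at most $(1/\delta)(1 + \ln\mu_{\max}) = en(1 + \ln\mu_{\max}) = O(n(\log\mu_{\max}+1))$, as claimed.

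The main subtlety I anticipate is that the identity of $x^*_t$ can change across steps when ties for the minimum are broken differently or when several population members get removed after an acceptance. However, the drift argument only uses the scalar $\phi_t = \mu(x^*_t)$ and not the particular representative selected, and monotonicity $\phi_{t+1} \leq \phi_t$ is guaranteed by the dominance-based acceptance rule together with the fact that any newly added $y$ with $\mu(y) < \phi_t$ is certainly not removed. Hence no further control on the size or composition of $P$ is required, and in particular the analysis does not depend on $\pmax$.
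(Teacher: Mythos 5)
Your proposal is correct and follows essentially the same route as the paper: multiplicative drift on the minimum $\mu$-value of the population with drift rate $\delta = 1/(en)$, using $X_0\le\mu_{\max}$ and the fact that the smallest positive $\mu$-value is at least~$1$. The only differences are presentational — you spell out why an offspring with smaller $\mu$-value cannot be strictly dominated and why the duplicate-avoiding retry in the mutation operator does not hurt the one-bit-flip probability, both of which the paper leaves implicit.
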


\begin{proof}
We apply multiplicative drift analysis \cite{DoerrJohannsenWinzenALGO12} with respect to the stochastic process $X_t\coloneqq \min\{\mu(x)\mid x\in P_t\}$, \ie, the minimum 
expected value of the individuals of the population at time~$t$. By definition, before the all-zeros string is included in the population, 
\swgsemotD chooses only individuals of minimum $\mu$-value for mutation. The current $\mu$-value of an individual 
is the sum of the expected values belonging to the 
bit positions that are set to~$1$. Standard-bit mutation flips each of these positions to~$0$ without flipping any other bit with probability at least~$(1/n)(1-1/n)^{n-1}\ge 1/(en)$. Such steps decrease the $\mu$-value of the 
solution, which is therefore not dominated by 
any other solution in the population and will be 
included afterwards. Hence, we obtain the drift $E(X_t - X_{t+1} \mid X_t) \ge X_t/(en)$. 
Using the parameter $\delta = 1/(en)$, $X_0\le \mu_{\max}$ and the fact that the smallest non-zero expected value of a bit is at least~$1$, we  
apply the multiplicative drift theorem \cite{DoerrJohannsenWinzenALGO12} and obtain an expected time of at most 
$\frac{\ln(\mu_{\max})+1}{\delta} = O(n(\log \mu_{\max}+1))$ to reach state~$0$ in the~$X_t$-process, \ie, an individual 
with all zeros. 
\end{proof}

\section{Experiments}
\label{sec:experiments}

We carry out experimental investigations for the new sliding window approach on the chance constrained dominating set problem and show where the new approach performs significantly better than the ones introduced in \cite{DBLP:conf/ijcai/0001W22,NeumannWittGECCO23}.

We recall the chance-constrained 
dominating set problem. 
The input is given as a graph $G=(V,E)$ with $n=|V|$ nodes and weights on the nodes.
The goal is to compute a set of nodes $D \subseteq V$ of minimal weight such that each node of the graph is dominated by $D$, i.e.\  either contained in $D$ or adjacent to a node in $D$. In our setting the weight $w_i$ of each node $v_i$ is chosen independently of the others according to a normal distribution $N(\mu_i, \sigma_i^2)$.
The constraint function $c(x)$ counts the number of nodes dominated by the given search point~$x$. As each node needs to be dominated in a feasible solution, $x$ is feasible iff $c(x)=n$ holds and therefore work with the bound $B=n$ in the algorithms.
We start with an initial solution $x \in \{0,1\}^n$ chosen uniformly at random.
We also investigate starting with $x = 0^n$ for Fast SW-GSEMO3D (denoted as Fast SW-GSEMO3D$_0$)  in the case of large graphs as this could be beneficial for such settings. We try to give some explanation by considering how the maximal population size differs when starting with a solution chosen uniformly at random or with~$0^n$.

\newgeometry{left=20mm, top=3mm} 

\begin{table*}[htbp]

 \caption{
    Results for stochastic minimum weight dominating set with different confidence levels of $\alpha$ where $\alpha=1-\beta$. Results after 10M fitness evaluations.
    $p_1$: Test GSEMO2D vs GSEMO3D, $p_2$: Test GSEMO2D vs Fast SW-GSEMO3D, $p_3$: Test GSEMO3D vs Fast SW-GSEMO3D, $p_4$: Fast GSEMO3D vs Fast SW-GSEMO3D$_0$.
    Penalty function value for run not obtaining a feasible solution is $10^{10}$ (applied to GSEMO3D for graphs ca-GrQc and Erdos992 )
    }
\tiny
    \centering
    \begin{tabular}{|p{10mm}|c||c|c|c|c|c||c|c|c|c|c|c|c|} \hline 
       \multirow{2}{*}{\shortstack{Graph/\\weight type}}  
       & \multirow{2}{*}{$\beta$} &  \multicolumn{2}{c|}{\bfseries GSEMO2D~\cite{NeumannWittGECCO23}} & \multicolumn{3}{c|}{\bfseries GSEMO3D~\cite{NeumannWittGECCO23}}  &  \multicolumn{4}{c|}{\bfseries Fast SW-GSEMO3D} &  \multicolumn{3}{c|}{\bfseries Fast SW-GSEMO3D$_0$} \\
   &  &    Mean & Std & Mean & Std & $p_1$-value &     Mean & Std & $p_2$-value & $p_3$-value &     Mean & Std & $p_4$-value \\ \hline

  \multirow{12}{*}{\shortstack{cfat200-1/\\uniform}} & 
    0.2 & 3615 & 91 & 3599 & 79 & 0.544 & \textbf{3594} & 75 & 0.420 & 0.807 & 3598 & 74 & 0.767\\
&  0.1 & 3989 & 96 & 3972 & 80 & 0.544 & \textbf{3967} & 77 & 0.391 & 0.734 & 3971 & 79 & 0.745\\
&  0.01 & 4866 & 109 & 4845 & 86 & 0.535 & \textbf{4842} & 87 & 0.383 & 0.836 & 4846 & 90 & 0.784\\
&  1.0E-4 & 6015 & 126 & 5991 & 98 & 0.455 & \textbf{5989} & 101 & 0.412 & 0.894 & \textbf{5989} & 100 & 0.888\\
&  1.0E-6 & 6855 & 138 & 6832 & 108 & 0.605 & 6827 & 108 & 0.420 & 0.712 & \textbf{6825} & 107 & 0.848\\
&  1.0E-8 & 7546 & 147 & 7525 & 118 & 0.641 & 7517 & 115 & 0.455 & 0.723 & \textbf{7514} & 114 & 0.935\\
&  1.0E-10 & 8145 & 154 & 8125 & 125 & 0.751 & 8115 & 120 & 0.525 & 0.717 & \textbf{8112} & 120 & 0.853\\
&  1.0E-12 & 8680 & 159 & 8660 & 130 & 0.859 & 8651 & 126 & 0.615 & 0.790 & \textbf{8646} & 124 & 0.802\\
&  1.0E-14 & 9169 & 164 & 9148 & 133 & 0.842 & 9139 & 130 & 0.600 & 0.728 & \textbf{9133} & 128 & 0.830\\
\hline
 \multirow{12}{*}{\shortstack{cfat200-2/\\uniform}} 
 &  0.2 & 1791 & 49 & 1767 & 32 & 0.049 & 1766 & 33 & 0.031 & 0.712 & \textbf{1765} & 33 & 0.971\\
&  0.1 & 2040 & 54 & 2016 & 37 & 0.074 & 2014 & 36 & 0.044 & 0.819 & \textbf{2013} & 37 & 0.824\\
&  0.01 & 2621 & 72 & 2593 & 51 & 0.162 & 2588 & 49 & 0.066 & 0.610 & \textbf{2587} & 50 & 0.912\\
&  1.0E-4 & 3381 & 97 & 3336 & 65 & 0.070 & \textbf{3334} & 66 & 0.061 & 0.836 & \textbf{3334} & 67 & 0.947\\
&  1.0E-6 & 3937 & 113 & 3880 & 71 & 0.044 & \textbf{3879} & 75 & 0.036 & 0.853 & \textbf{3879} & 76 & 0.994\\
&  1.0E-8 & 4394 & 124 & 4329 & 77 & 0.032 & \textbf{4328} & 79 & 0.027 & 0.853 & \textbf{4328} & 79 & 1.000\\
&  1.0E-10 & 4793 & 132 & 4720 & 82 & 0.028 & 4719 & 82 & 0.021 & 0.877 & \textbf{4718} & 82 & 1.000\\
&  1.0E-12 & 5149 & 139 & 5071 & 85 & 0.024 & 5069 & 85 & 0.020 & 0.888 & \textbf{5068} & 85 & 0.988\\
&  1.0E-14 & 5475 & 145 & 5391 & 88 & 0.020 & \textbf{5389} & 87 & 0.016 & 0.912 & \textbf{5389} & 87 & 0.994\\
\hline
 \multirow{12}{*}{\shortstack{ca-netscience/\\uniform}} 
 &  0.2 & 33042 & 1289 & 33007 & 1023 & 0.712 & \textbf{32398} & 814 & 0.038 & 0.018 & 32399 & 861 & 0.976\\
&  0.1 & 34568 & 1302 & 34514 & 1028 & 0.745 & \textbf{33907} & 815 & 0.031 & 0.019 & 33908 & 865 & 0.988\\
&  0.01 & 38189 & 1334 & 38089 & 1040 & 0.848 & \textbf{37486} & 821 & 0.019 & 0.019 & 37489 & 874 & 0.941\\
&  1.0E-4 & 43012 & 1380 & 42846 & 1054 & 1.000 & \textbf{42248} & 841 & 0.011 & 0.020 & 42252 & 881 & 0.988\\
&  1.0E-6 & 46591 & 1415 & 46377 & 1065 & 0.824 & \textbf{45783} & 858 & 0.009 & 0.023 & 45786 & 888 & 0.882\\
&  1.0E-8 & 49557 & 1442 & 49303 & 1076 & 0.712 & \textbf{48712} & 870 & 0.008 & 0.021 & 48717 & 896 & 0.894\\
&  1.0E-10 & 52145 & 1465 & 51857 & 1087 & 0.615 & \textbf{51266} & 883 & 0.009 & 0.028 & 51275 & 906 & 0.935\\
&  1.0E-12 & 54467 & 1487 & 54150 & 1096 & 0.564 & \textbf{53557} & 894 & 0.007 & 0.028 & 53570 & 914 & 0.923\\
&  1.0E-14 & 56592 & 1507 & 56249 & 1105 & 0.487 & \textbf{55653} & 905 & 0.006 & 0.029 & 55670 & 923 & 0.912\\
\hline
 \multirow{12}{*}{\shortstack{ca-GrQc\\/uniform}} 
 &  0.2 & 5646101 & 79194 & 9666938258 & 1824254292 & 0.000 & \textbf{4920986} & 45094 & 0.000 & 0.000 & 4924856 & 40968 & 0.756\\
&  0.1 & 5712770 & 79494 & 9666940921 & 1824239705 & 0.000 & \textbf{4983403} & 45308 & 0.000 & 0.000 & 4987255 & 41159 & 0.756\\
&  0.01 & 5871104 & 80213 & 9666947246 & 1824205061 & 0.000 & \textbf{5131640} & 45823 & 0.000 & 0.000 & 5135447 & 41621 & 0.790\\
&  1.0E-4 & 6082155 & 81182 & 9666955677 & 1824158882 & 0.000 & \textbf{5329219} & 46511 & 0.000 & 0.000 & 5332980 & 42256 & 0.767\\
&  1.0E-6 & 6238913 & 81909 & 9666961940 & 1824124582 & 0.000 & \textbf{5475970} & 47028 & 0.000 & 0.000 & 5479688 & 42740 & 0.779\\
&  1.0E-8 & 6369023 & 82517 & 9666967137 & 1824096113 & 0.000 & \textbf{5597768} & 47451 & 0.000 & 0.000 & 5601451 & 43151 & 0.802\\
&  1.0E-10 & 6482579 & 83051 & 9666971674 & 1824071266 & 0.000 & \textbf{5704069} & 47822 & 0.000 & 0.000 & 5707719 & 43516 & 0.779\\
&  1.0E-12 & 6584589 & 83534 & 9666975749 & 1824048945 & 0.000 & \textbf{5799561} & 48160 & 0.000 & 0.000 & 5803181 & 43848 & 0.767\\
&  1.0E-14 & 6677976 & 83978 & 9666979480 & 1824028511 & 0.000 & \textbf{5886980} & 48471 & 0.000 & 0.000 & 5890573 & 44156 & 0.767\\
\hline
 \multirow{12}{*}{\shortstack{Erdos992/\\uniform}} & 
   0.2 & 13716872 & 82588 & 10000000000 & 0 & 0.000 & 13482678 & 62860 & 0.000 & 0.000 & \textbf{13477560} & 55830 & 0.848\\
&  0.1 & 13842990 & 82789 & 10000000000 & 0 & 0.000 & 13607667 & 62812 & 0.000 & 0.000 & \textbf{13602550} & 55731 & 0.848\\
&  0.01 & 14142509 & 83278 & 10000000000 & 0 & 0.000 & 13904505 & 62706 & 0.000 & 0.000 & \textbf{13899386} & 55512 & 0.813\\
&  1.0E-4 & 14541754 & 83954 & 10000000000 & 0 & 0.000 & 14300178 & 62586 & 0.000 & 0.000 & \textbf{14295055} & 55242 & 0.790\\
&  1.0E-6 & 14838295 & 84474 & 10000000000 & 0 & 0.000 & 14594065 & 62512 & 0.000 & 0.000 & \textbf{14588938} & 55059 & 0.836\\
&  1.0E-8 & 15084429 & 84917 & 10000000000 & 0 & 0.000 & 14837996 & 62461 & 0.000 & 0.000 & \textbf{14832866} & 54917 & 0.836\\
&  1.0E-10 & 15299247 & 85313 & 10000000000 & 0 & 0.000 & 15050890 & 62423 & 0.000 & 0.000 & \textbf{15045759} & 54801 & 0.824\\
&  1.0E-12 & 15492221 & 85674 & 10000000000 & 0 & 0.000 & 15242138 & 62395 & 0.000 & 0.000 & \textbf{15237005} & 54703 & 0.836\\
&  1.0E-14 & 15668883 & 86011 & 10000000000 & 0 & 0.000 & 15417219 & 62375 & 0.000 & 0.000 & \textbf{15412085} & 54619 & 0.871\\
\hline

 \multirow{12}{*}{\shortstack{cfat200-1/\\degree}} & 
  0.2 & 4444 & 115 & \textbf{4387} & 6 & 0.001 & 4407 & 75 & 0.011 & 0.535 & 4398 & 55 & 0.779\\
&  0.1 & 4781 & 119 & \textbf{4721} & 9 & 0.003 & 4742 & 77 & 0.023 & 0.446 & 4733 & 56 & 0.790\\
&  0.01 & 5582 & 129 & \textbf{5512} & 16 & 0.004 & 5535 & 83 & 0.036 & 0.348 & 5526 & 61 & 0.819\\
&  1.0E-4 & 6650 & 143 & \textbf{6566} & 26 & 0.003 & 6592 & 91 & 0.035 & 0.287 & 6584 & 68 & 0.830\\
&  1.0E-6 & 7443 & 154 & \textbf{7349} & 34 & 0.003 & 7378 & 98 & 0.037 & 0.268 & 7369 & 74 & 0.830\\
&  1.0E-8 & 8101 & 163 & \textbf{7999} & 40 & 0.003 & 8029 & 103 & 0.041 & 0.268 & 8021 & 79 & 0.830\\
&  1.0E-10 & 8675 & 171 & \textbf{8567} & 45 & 0.003 & 8598 & 108 & 0.044 & 0.261 & 8590 & 84 & 0.865\\
&  1.0E-12 & 9191 & 178 & \textbf{9076} & 50 & 0.003 & 9109 & 113 & 0.043 & 0.261 & 9101 & 88 & 0.865\\
&  1.0E-14 & 9663 & 185 & \textbf{9542} & 55 & 0.003 & 9577 & 118 & 0.041 & 0.268 & 9569 & 92 & 0.853\\
\hline
 \multirow{12}{*}{\shortstack{cfat200-2/\\degree}} 
 & 0.2 & 3041 & 172 & \textbf{2963} & 4 & 0.027 & \textbf{2963} & 4 & 0.027 & 0.929 & \textbf{2963} & 4 & 0.830\\
&  0.1 & 3267 & 178 & \textbf{3185} & 6 & 0.027 & \textbf{3185} & 6 & 0.027 & 0.929 & \textbf{3185} & 6 & 0.830\\
&  0.01 & 3803 & 194 & 3713 & 11 & 0.027 & 3713 & 10 & 0.027 & 0.929 & \textbf{3712} & 10 & 0.830\\
&  1.0E-4 & 4518 & 216 & 4416 & 17 & 0.027 & \textbf{4415} & 16 & 0.027 & 0.929 & \textbf{4415} & 17 & 0.830\\
&  1.0E-6 & 5049 & 232 & 4938 & 22 & 0.027 & \textbf{4937} & 21 & 0.027 & 0.929 & \textbf{4937} & 21 & 0.830\\
&  1.0E-8 & 5490 & 245 & 5371 & 26 & 0.027 & 5371 & 24 & 0.027 & 0.929 & \textbf{5370} & 25 & 0.830\\
&  1.0E-10 & 5875 & 257 & 5749 & 30 & 0.027 & 5749 & 28 & 0.027 & 0.929 & \textbf{5748} & 28 & 0.830\\
&  1.0E-12 & 6220 & 267 & 6089 & 33 & 0.027 & 6088 & 30 & 0.027 & 0.929 & \textbf{6087} & 31 & 0.830\\
&  1.0E-14 & 6537 & 277 & 6400 & 36 & 0.027 & 6399 & 33 & 0.027 & 0.929 & \textbf{6398} & 34 & 0.830\\
\hline
 \multirow{12}{*}{\shortstack{ca-netscience/\\degree}} 
 & 0.2 & 28164 & 1002 & 26169 & 196 & 0.000 & \textbf{26097} & 197 & 0.000 & 0.017 & 26098 & 193 & 0.900\\
&  0.1 & 29689 & 1029 & 27657 & 200 & 0.000 & \textbf{27580} & 207 & 0.000 & 0.038 & 27583 & 201 & 0.853\\
&  0.01 & 33300 & 1098 & 31183 & 216 & 0.000 & \textbf{31092} & 238 & 0.000 & 0.114 & 31098 & 224 & 0.848\\
&  1.0E-4 & 38103 & 1192 & 35874 & 251 & 0.000 & \textbf{35758} & 284 & 0.000 & 0.092 & 35767 & 266 & 0.813\\
&  1.0E-6 & 41665 & 1265 & 39355 & 285 & 0.000 & \textbf{39220} & 324 & 0.000 & 0.076 & 39230 & 303 & 0.813\\
&  1.0E-8 & 44620 & 1327 & 42243 & 317 & 0.000 & \textbf{42091} & 359 & 0.000 & 0.067 & 42103 & 336 & 0.813\\
&  1.0E-10 & 47198 & 1381 & 44763 & 347 & 0.000 & \textbf{44596} & 390 & 0.000 & 0.067 & 44610 & 366 & 0.784\\
&  1.0E-12 & 49514 & 1429 & 47026 & 374 & 0.000 & \textbf{46845} & 418 & 0.000 & 0.074 & 46861 & 394 & 0.842\\
&  1.0E-14 & 51633 & 1474 & 49098 & 400 & 0.000 & \textbf{48905} & 444 & 0.000 & 0.081 & 48921 & 419 & 0.830\\
\hline
 \multirow{12}{*}{\shortstack{ca-GrQc/\\degree}} 
 & 0.2 & 4032668 & 60538 & 9666845352 & 1824763160 & 0.000 & \textbf{3455870} & 17041 & 0.000 & 0.000 & 3457971 & 16176 & 0.460\\
&  0.1 & 4100297 & 61062 & 9666847956 & 1824748898 & 0.000 & \textbf{3517608} & 17204 & 0.000 & 0.000 & 3519680 & 16336 & 0.442\\
&  0.01 & 4260901 & 62312 & 9666854140 & 1824715027 & 0.000 & \textbf{3664186} & 17591 & 0.000 & 0.000 & 3666208 & 16722 & 0.442\\
&  1.0E-4 & 4474975 & 63984 & 9666862383 & 1824669878 & 0.000 & \textbf{3859529} & 18140 & 0.000 & 0.000 & 3861506 & 17249 & 0.408\\
&  1.0E-6 & 4633978 & 65230 & 9666868505 & 1824636344 & 0.000 & \textbf{4004604} & 18568 & 0.000 & 0.000 & 4006550 & 17648 & 0.460\\
&  1.0E-8 & 4765953 & 66266 & 9666873587 & 1824608510 & 0.000 & \textbf{4125007} & 18930 & 0.000 & 0.000 & 4126935 & 17988 & 0.469\\
&  1.0E-10 & 4881136 & 67173 & 9666878022 & 1824584217 & 0.000 & \textbf{4230080} & 19246 & 0.000 & 0.000 & 4232003 & 18291 & 0.460\\
&  1.0E-12 & 4984607 & 67988 & 9666882006 & 1824562394 & 0.000 & \textbf{4324470} & 19538 & 0.000 & 0.000 & 4326386 & 18569 & 0.451\\
&  1.0E-14 & 5079332 & 68736 & 9666885654 & 1824542416 & 0.000 & \textbf{4410880} & 19810 & 0.000 & 0.000 & 4412792 & 18828 & 0.478\\
\hline
 \multirow{12}{*}{\shortstack{Erdos992/\\degree}} 
 & 0.2 & 9307396 & 60880 & 10000000000 & 0 & 0.000 & 9104433 & 4932 & 0.000 & 0.000 & \textbf{9104421} & 4931 & 0.965\\
&  0.1 & 9433699 & 61228 & 10000000000 & 0 & 0.000 & 9229249 & 5100 & 0.000 & 0.000 & \textbf{9229244} & 4958 & 0.906\\
&  0.01 & 9733657 & 62061 & 10000000000 & 0 & 0.000 & \textbf{9525667} & 5566 & 0.000 & 0.000 & 9525686 & 5110 & 0.988\\
&  1.0E-4 & 10133488 & 63184 & 10000000000 & 0 & 0.000 & \textbf{9920775} & 6299 & 0.000 & 0.000 & 9920827 & 5490 & 0.953\\
&  1.0E-6 & 10430463 & 64027 & 10000000000 & 0 & 0.000 & \textbf{10214242} & 6902 & 0.000 & 0.000 & 10214318 & 5882 & 0.941\\
&  1.0E-8 & 10676958 & 64732 & 10000000000 & 0 & 0.000 & \textbf{10457822} & 7430 & 0.000 & 0.000 & 10457921 & 6265 & 0.988\\
&  1.0E-10 & 10892090 & 65351 & 10000000000 & 0 & 0.000 & \textbf{10670412} & 7907 & 0.000 & 0.000 & 10670530 & 6635 & 0.976\\
    &  1.0E-12 & 11085348 & 65911 & 10000000000 & 0 & 0.000 & \textbf{10861385} & 8347 & 0.000 & 0.000 & 10861521 & 6990 & 0.976\\
&  1.0E-14 & 11262269 & 66425 & 10000000000 & 0 & 0.000 & \textbf{11036215} & 8757 & 0.000 & 0.000 & 11036367 & 7332 & 1.000\\
\hline

  \end{tabular}
   
    \label{tab:results}
\end{table*}

\newgeometry{left=15mm} 

\begin{table*}[htbp]
 \caption{
    Results for stochastic minimum weight dominating set with different confidence levels of $\alpha$ where $\alpha=1-\beta$. Results after 1M fitness evaluations.
    $p_1$: Test (1+1) EA vs GSEMO2D, $p_2$: Test (1+1) EA vs Fast SW-GSEMO3D, $p_3$: Test GSEMO2D vs Fast SW-GSEMO3D, $p_4$: Test (1+1) EA vs Fast SW-GSEMO3D$_0$, $p_5$: Test GSEMO2D vs Fast SW-GSEMO3D$_0$, $p_6$: Test Fast GSEMO3D vs Fast SW-GSEMO3D$_0$.
    }
\tiny
    \centering
    \begin{tabular}{|p{9mm}|c||c|c|c|c|c||c|c|c|c|c|c|c|c|c|} \hline 
       \multirow{2}{*}{\shortstack{Graph/\\weight type}} 
       & \multirow{2}{*}{$\beta$} &  \multicolumn{2}{c|}{\bfseries (1+1) EA~\cite{DBLP:conf/ijcai/0001W22}} & \multicolumn{3}{c|}{\bfseries GSEMO2D~\cite{DBLP:conf/ijcai/0001W22,NeumannWittGECCO23}}  &  \multicolumn{4}{c|}{\bfseries Fast SW-GSEMO3D} &  \multicolumn{5}{c|}{\bfseries Fast SW-GSEMO3D$_0$} \\
 &   &    Mean & Std & Mean & Std & $p_1$-val &     Mean & Std & $p_2$-val & $p_3$-val &     Mean & Std & $p_4$-val & $p_5$-val & $p_6$-val \\ \hline

  \multirow{12}{*}{\shortstack{ca-CSphd/\\uniform}} 
  & 0.2 & 1176951 & 29560 & 1149185 & 21187 & 0.000 & 1053428 & 5919 & 0.000 & 0.000 & \textbf{1052480} & 4910 & 0.000 & 0.000 & 0.367\\
&  0.1 & 1200964 & 25599 & 1173498 & 21419 & 0.000 & 1076406 & 5973 & 0.000 & 0.000 & \textbf{1075454} & 4965 & 0.000 & 0.000 & 0.367\\
&  0.01 & 1235668 & 29329 & 1231241 & 21969 & 0.836 & 1130976 & 6108 & 0.000 & 0.000 & \textbf{1130017} & 5105 & 0.000 & 0.000 & 0.383\\
&  1.0E-4 & 1314570 & 28190 & 1308208 & 22705 & 0.451 & 1203715 & 6301 & 0.000 & 0.000 & \textbf{1202747} & 5308 & 0.000 & 0.000 & 0.375\\
&  1.0E-6 & 1378890 & 25618 & 1365376 & 23254 & 0.062 & 1257743 & 6455 & 0.000 & 0.000 & \textbf{1256767} & 5471 & 0.000 & 0.000 & 0.391\\
&  1.0E-8 & 1410240 & 22358 & 1412826 & 23711 & 0.712 & 1302586 & 6587 & 0.000 & 0.000 & \textbf{1301605} & 5612 & 0.000 & 0.000 & 0.383\\
&  1.0E-10 & 1455663 & 21030 & 1454239 & 24110 & 0.894 & 1341724 & 6707 & 0.000 & 0.000 & \textbf{1340738} & 5740 & 0.000 & 0.000 & 0.375\\
&  1.0E-12 & 1495936 & 29008 & 1491441 & 24470 & 0.574 & 1376883 & 6818 & 0.000 & 0.000 & \textbf{1375892} & 5859 & 0.000 & 0.000 & 0.433\\
&  1.0E-14 & 1526403 & 25752 & 1525499 & 24799 & 1.000 & 1409069 & 6921 & 0.000 & 0.000 & \textbf{1408074} & 5970 & 0.000 & 0.000 & 0.469\\
\hline
 \multirow{12}{*}{\shortstack{ca-HepPh/\\uniform}} 
 & 0.2 & 24866045 & 323815 & 24664260 & 251849 & 0.010 & 21903190 & 229592 & 0.000 & 0.000 & \textbf{21655867} & 211163 & 0.000 & 0.000 & 0.000\\
&  0.1 & 25126756 & 223438 & 24941951 & 253168 & 0.006 & 22162387 & 230935 & 0.000 & 0.000 & \textbf{21913353} & 212217 & 0.000 & 0.000 & 0.000\\
&  0.01 & 25709929 & 219138 & 25601440 & 256304 & 0.101 & 22777957 & 234129 & 0.000 & 0.000 & \textbf{22524858} & 214726 & 0.000 & 0.000 & 0.000\\
&  1.0E-4 & 26602650 & 271535 & 26480507 & 260496 & 0.132 & 23598486 & 238398 & 0.000 & 0.000 & \textbf{23339968} & 218088 & 0.000 & 0.000 & 0.000\\
&  1.0E-6 & 27104133 & 304517 & 27133437 & 263618 & 0.595 & 24207935 & 241578 & 0.000 & 0.000 & \textbf{23945393} & 220598 & 0.000 & 0.000 & 0.000\\
&  1.0E-8 & 27675018 & 335011 & 27675380 & 266213 & 0.953 & 24713788 & 244221 & 0.000 & 0.000 & \textbf{24447901} & 222696 & 0.000 & 0.000 & 0.000\\
&  1.0E-10 & 28123068 & 314336 & 28148371 & 268482 & 0.941 & 25155281 & 246532 & 0.000 & 0.000 & \textbf{24886470} & 224540 & 0.000 & 0.000 & 0.000\\
&  1.0E-12 & 28616742 & 357514 & 28573268 & 270522 & 0.636 & 25551883 & 248611 & 0.000 & 0.000 & \textbf{25280441} & 226199 & 0.000 & 0.000 & 0.000\\
&  1.0E-14 & 28831138 & 286317 & 28962248 & 272392 & 0.143 & 25914960 & 250516 & 0.000 & 0.000 & \textbf{25641110} & 227723 & 0.000 & 0.000 & 0.000\\
\hline
 \multirow{12}{*}{\shortstack{ca-AstroPh/\\uniform}} 
 & 0.2 & 51557918 & 568600 & 51043030 & 528254 & 0.001 & 64103184 & 4470490 & 0.000 & 0.000 & \textbf{45226809} & 500442 & 0.000 & 0.000 & 0.000\\
&  0.1 & 51942457 & 555700 & 51548678 & 531285 & 0.021 & 64668884 & 4491484 & 0.000 & 0.000 & \textbf{45698407} & 502905 & 0.000 & 0.000 & 0.000\\
&  0.01 & 53161346 & 658583 & 52749539 & 538490 & 0.017 & 66012371 & 4541343 & 0.000 & 0.000 & \textbf{46818411} & 508759 & 0.000 & 0.000 & 0.000\\
&  1.0E-4 & 54581672 & 577272 & 54350226 & 548110 & 0.160 & 67803180 & 4607807 & 0.000 & 0.000 & \textbf{48311327} & 516571 & 0.000 & 0.000 & 0.000\\
&  1.0E-6 & 55574306 & 568036 & 55539139 & 555269 & 0.965 & 69133305 & 4657175 & 0.000 & 0.000 & \textbf{49420191} & 522386 & 0.000 & 0.000 & 0.000\\
&  1.0E-8 & 56482376 & 659036 & 56525957 & 561218 & 0.525 & 70237331 & 4698155 & 0.000 & 0.000 & \textbf{50340566} & 527216 & 0.000 & 0.000 & 0.000\\
&  1.0E-10 & 56997947 & 442067 & 57387223 & 566415 & 0.003 & 71200892 & 4733922 & 0.000 & 0.000 & \textbf{51143842} & 531435 & 0.000 & 0.000 & 0.000\\
&  1.0E-12 & 58002729 & 535712 & 58160914 & 571088 & 0.255 & 72066476 & 4766054 & 0.000 & 0.000 & \textbf{51865440} & 535228 & 0.000 & 0.000 & 0.000\\
&  1.0E-14 & 58598177 & 480173 & 58869203 & 575369 & 0.033 & 72858892 & 4795471 & 0.000 & 0.000 & \textbf{52526040} & 538702 & 0.000 & 0.000 & 0.000\\
\hline
 \multirow{12}{*}{\shortstack{ca-CondMat/\\uniform}} 
 & 0.2 & 87564936 & 940507 & 86293144 & 783450 & 0.000 & 431800766 & 1807172824 & 0.000 & 0.000 & \textbf{75931086} & 610598 & 0.000 & 0.000 & 0.000\\
&  0.1 & 87993459 & 758163 & 87014750 & 786716 & 0.000 & 432555511 & 1807030509 & 0.000 & 0.000 & \textbf{76602241} & 613185 & 0.000 & 0.000 & 0.000\\
&  0.01 & 89127748 & 754815 & 88728501 & 794478 & 0.069 & 434347964 & 1806692530 & 0.000 & 0.000 & \textbf{78196177} & 619334 & 0.000 & 0.000 & 0.000\\
&  1.0E-4 & 91086972 & 739979 & 91012856 & 804836 & 0.859 & 436737226 & 1806242026 & 0.000 & 0.000 & \textbf{80320825} & 627546 & 0.000 & 0.000 & 0.000\\
&  1.0E-6 & 92467544 & 650611 & 92709566 & 812539 & 0.204 & 438511855 & 185907420 & 0.000 & 0.000 & \textbf{81898913} & 633655 & 0.000 & 0.000 & 0.000\\
&  1.0E-8 & 93588939 & 815736 & 94117866 & 818937 & 0.015 & 439984829 & 1805629695 & 0.000 & 0.000 & \textbf{83208753} & 638731 & 0.000 & 0.000 & 0.000\\
&  1.0E-10 & 94695345 & 520061 & 95346987 & 824526 & 0.002 & 441270396 & 1805387308 & 0.000 & 0.000 & \textbf{84351942} & 643167 & 0.000 & 0.000 & 0.000\\
&  1.0E-12 & 96086744 & 975803 & 96451130 & 829550 & 0.183 & 442425245 & 1805169569 & 0.000 & 0.000 & \textbf{85378890} & 647155 & 0.000 & 0.000 & 0.000\\
&  1.0E-14 & 96686021 & 889063 & 97461938 & 834151 & 0.001 & 443482473 & 1804970239 & 0.000 & 0.000 & \textbf{86319029} & 650809 & 0.000 & 0.000 & 0.000\\
\hline

 \multirow{12}{*}{\shortstack{ca-CSphd/\\degree}} 
 & 0.2 & 1176359 & 23453 & 1166190 & 32090 & 0.071 & 1053397 & 6005 & 0.000 & 0.000 & \textbf{1052796} & 5364 & 0.000 & 0.000 & 0.668\\
&  0.1 & 1197763 & 27695 & 1190714 & 32418 & 0.322 & 1076425 & 6076 & 0.000 & 0.000 & \textbf{1075804} & 5419 & 0.000 & 0.000 & 0.657\\
&  0.01 & 1243411 & 22570 & 1248957 & 33200 & 0.416 & 1131114 & 6256 & 0.000 & 0.000 & \textbf{1130446} & 5555 & 0.000 & 0.000 & 0.647\\
&  1.0E-4 & 1318313 & 23041 & 1326592 & 34244 & 0.294 & 1204011 & 6514 & 0.000 & 0.000 & \textbf{1203281} & 5748 & 0.000 & 0.000 & 0.615\\
&  1.0E-6 & 1370672 & 30267 & 1384255 & 35022 & 0.110 & 1258155 & 6718 & 0.000 & 0.000 & \textbf{1257380} & 5898 & 0.000 & 0.000 & 0.615\\
&  1.0E-8 & 1411274 & 28448 & 1432117 & 35668 & 0.004 & 1303096 & 6895 & 0.000 & 0.000 & \textbf{1302282} & 6027 & 0.000 & 0.000 & 0.605\\
&  1.0E-10 & 1465714 & 31864 & 1473889 & 36233 & 0.322 & 1342319 & 7054 & 0.000 & 0.000 & \textbf{1341472} & 6143 & 0.000 & 0.000 & 0.584\\
&  1.0E-12 & 1494845 & 26265 & 1511414 & 36742 & 0.076 & 1377554 & 7202 & 0.000 & 0.000 & \textbf{1376676} & 6249 & 0.000 & 0.000 & 0.595\\
&  1.0E-14 & 1539841 & 28989 & 1545767 & 37207 & 0.756 & 1409811 & 7339 & 0.000 & 0.000 & \textbf{1408905} & 6349 & 0.000 & 0.000 & 0.584\\
\hline
 \multirow{12}{*}{\shortstack{ca-HepPh/\\degree}} 
 & 0.2 & 24940255 & 229915 & 24770247 & 328453 & 0.019 & 21925184 & 256481 & 0.000 & 0.000 & \textbf{21672753} & 170643 & 0.000 & 0.000 & 0.000\\
&  0.1 & 25129650 & 329488 & 25048454 & 330378 & 0.322 & 22184365 & 258158 & 0.000 & 0.000 & \textbf{21930197} & 171670 & 0.000 & 0.000 & 0.000\\
&  0.01 & 25755684 & 291735 & 25709164 & 334958 & 0.584 & 22799898 & 262146 & 0.000 & 0.000 & \textbf{22541605} & 174114 & 0.000 & 0.000 & 0.000\\
&  1.0E-4 & 26478853 & 313950 & 26589855 & 341060 & 0.156 & 23620377 & 267470 & 0.000 & 0.000 & \textbf{23356586} & 177388 & 0.000 & 0.000 & 0.000\\
&  1.0E-6 & 27073736 & 305766 & 27243988 & 345599 & 0.055 & 24229788 & 271431 & 0.000 & 0.000 & \textbf{23961915} & 179829 & 0.000 & 0.000 & 0.000\\
&  1.0E-8 & 27647166 & 283416 & 27786927 & 349368 & 0.086 & 24735610 & 274722 & 0.000 & 0.000 & \textbf{24464348} & 181862 & 0.000 & 0.000 & 0.000\\
&  1.0E-10 & 28101126 & 327539 & 28260785 & 352656 & 0.079 & 25177076 & 277597 & 0.000 & 0.000 & \textbf{24902857} & 183641 & 0.000 & 0.000 & 0.000\\
&  1.0E-12 & 28523939 & 323332 & 28686461 & 355612 & 0.071 & 25573654 & 280182 & 0.000 & 0.000 & \textbf{25296778} & 185243 & 0.000 & 0.000 & 0.000\\
&  1.0E-14 & 28937484 & 306489 & 29076155 & 358320 & 0.147 & 25936707 & 282550 & 0.000 & 0.000 & \textbf{25657400} & 186712 & 0.000 & 0.000 & 0.000\\
\hline
 \multirow{12}{*}{\shortstack{ca-AstroPh/\\degree}} 
 & 0.2 & 51524407 & 570578 & 50681144 & 611971 & 0.000 & 64564376 & 6120887 & 0.000 & 0.000 & \textbf{45109042} & 578792 & 0.000 & 0.000 & 0.000\\
&  0.1 & 52090421 & 613178 & 51184838 & 615193 & 0.000 & 65131764 & 6149639 & 0.000 & 0.000 & \textbf{45579883} & 581545 & 0.000 & 0.000 & 0.000\\
&  0.01 & 53271848 & 477150 & 52381067 & 622852 & 0.000 & 66479261 & 6217924 & 0.000 & 0.000 & \textbf{46698084} & 588091 & 0.000 & 0.000 & 0.000\\
&  1.0E-4 & 54408644 & 498038 & 53975585 & 633070 & 0.012 & 68275417 & 6308950 & 0.000 & 0.000 & \textbf{48188591} & 596839 & 0.000 & 0.000 & 0.000\\
&  1.0E-6 & 55533826 & 541501 & 55159915 & 640666 & 0.015 & 69609515 & 6376561 & 0.000 & 0.000 & \textbf{49295668} & 603348 & 0.000 & 0.000 & 0.000\\
&  1.0E-8 & 56254153 & 556558 & 56142930 & 646976 & 0.469 & 70716840 & 6432682 & 0.000 & 0.000 & \textbf{50214561} & 608758 & 0.000 & 0.000 & 0.000\\
&  1.0E-10 & 57221946 & 419431 & 57000876 & 652487 & 0.165 & 71683280 & 6481663 & 0.000 & 0.000 & \textbf{51016543} & 613485 & 0.000 & 0.000 & 0.000\\
&  1.0E-12 & 58115722 & 620857 & 57771583 & 657440 & 0.048 & 72551451 & 6525664 & 0.000 & 0.000 & \textbf{51736978} & 617736 & 0.000 & 0.000 & 0.000\\
&  1.0E-14 & 58619146 & 587556 & 58477140 & 661975 & 0.433 & 73346234 & 6565947 & 0.000 & 0.000 & \textbf{52396514} & 621630 & 0.000 & 0.000 & 0.000\\
\hline
 \multirow{12}{*}{\shortstack{ca-CondMat/\\degree}} 
 & 0.2 & 87579791 & 869378 & 86547921 & 742313 & 0.000 & 103694132 & 9559675 & 0.000 & 0.000 & \textbf{75939104} & 868161 & 0.000 & 0.000 & 0.000\\
&  0.1 & 87713127 & 806685 & 87270373 & 745665 & 0.028 & 104482533 & 9598957 & 0.000 & 0.000 & \textbf{76609900} & 872129 & 0.000 & 0.000 & 0.000\\
&  0.01 & 89207002 & 639620 & 88986134 & 753635 & 0.132 & 106354916 & 9692249 & 0.000 & 0.000 & \textbf{78202983} & 881557 & 0.000 & 0.000 & 0.000\\
&  1.0E-4 & 90856334 & 857580 & 91273168 & 764272 & 0.110 & 108850723 & 9816605 & 0.000 & 0.000 & \textbf{80326490} & 894132 & 0.000 & 0.000 & 0.000\\
&  1.0E-6 & 92392083 & 865712 & 92971867 & 772182 & 0.017 & 110704488 & 9908973 & 0.000 & 0.000 & \textbf{81903730} & 903477 & 0.000 & 0.000 & 0.000\\
&  1.0E-8 & 93674413 & 808337 & 94381818 & 778752 & 0.003 & 112243147 & 9985641 & 0.000 & 0.000 & \textbf{83212867} & 911237 & 0.000 & 0.000 & 0.000\\
&  1.0E-10 & 94664923 & 600565 & 95612380 & 784491 & 0.000 & 113586040 & 10052555 & 0.000 & 0.000 & \textbf{84355442} & 918012 & 0.000 & 0.000 & 0.000\\
&  1.0E-12 & 96145670 & 762399 & 96717817 & 789649 & 0.017 & 114792388 & 10112666 & 0.000 & 0.000 & \textbf{85381839} & 924101 & 0.000 & 0.000 & 0.000\\
&  1.0E-14 & 96814075 & 636459 & 97729809 & 794374 & 0.000 & 115896760 & 10167696 & 0.000 & 0.000 & \textbf{86321473} & 929676 & 0.000 & 0.000 & 0.000\\
\hline

  \end{tabular}
   
    \label{tab:results2}
\end{table*}

\restoregeometry

\begin{table*}[t]
\caption{Average maximum population size and standard deviation during the 30 runs of 1M iterations for Fast SW-GSEMO3D and Fast SW-GSEMO3D$_0$ in the uniform random, degree-based setting for large graphs.}
\scriptsize
    \centering
    \begin{tabular}{|c|c|c|c|c|c|c|c|c|} \hline 
       \multirow{3}{*}{Graph} & \multicolumn{4}{|c|}{\bfseries Fast SW-GSEMO3D} & \multicolumn{4}{|c|}{\bfseries Fast SW-GSEMO3D$_0$}  \\ \hline
       
   &    \multicolumn{2}{|c|}{\bfseries uniform} & \multicolumn{2}{|c|}{\bfseries degree} & \multicolumn{2}{|c|}{\bfseries uniform} & \multicolumn{2}{|c|}{\bfseries degree}   \\ 
     & Mean & Std  & Mean &  Std & Mean & Std & Mean &  Std \\ \hline
    \multirow{1}{*}{ca-CSphd} &  665 & 40.555 & 670 & 37.727 & 225 & 16.829 & 230 & 15.616\\
\hline
 \multirow{1}{*}{ca-HepPh} &  2770 & 124.786 & 2713 & 166.804 & 125 & 15.561 & 128 & 20.372\\
\hline
 \multirow{1}{*}{ca-AstroPh} &  3608 & 167.880 & 3602 & 132.344 & 140 & 26.422 & 144 & 25.647\\
\hline
 \multirow{1}{*}{ca-CondMat} &  5196 & 130.968 & 5245 & 109.568 & 107 & 20.817 & 104 & 19.662\\
\hline
    \end{tabular}
    \vspace{-\medskipamount}
    \label{tab:popsize}
\end{table*}

As done in~\cite{DBLP:conf/ijcai/0001W22,NeumannWittGECCO23}, we consider the graphs cfat200-1, cfat200-2, ca-netscience, ca-GrQc, and Erdos992 consisting of $200$, $200$, $379$, $4158$, and $6100$ nodes respectively, together with 
the following categories for choosing the weights.
In the \emph{uniform} setting each weight $\mu(u)$ is an integer chosen independently and uniformly at random in $\{n, \ldots, 2 n\}$. The variance $v(u)$ is an integer chosen independently and uniformly at random in $\{n^2, \ldots, 2n^2\}$.
In the \emph{degree-based} setting, we have $\mu(u)= (n + \deg(u))^5/n^4$ where $\deg(u)$ is the degree of node $u$ in the given graph. The variance $v(u)$ is an integer chosen independently and uniformly at random in $\{n^2, \ldots, 2n^2\}$. 
For these graphs, we use 10M (million) fitness evaluations for each run.
We also use the graphs ca-CSphd, ca-HepPh, ca-AstroPh, ca-CondMat, which consist of 1882, 11204, 17903, 21363 nodes. They have already been investigated in \cite{NeumannWittECAI23} in the context of the maximum coverage problem. We examine the same uniform random and degree-based setting as described before. We consider 1M fitness evaluations for these graphs in order to investigate the performance on large graphs with a smaller fitness evaluation budget.

For our new sliding window algorithms we use $\tefrac=0.9$, $std=10$, $a=0.5$, $\epsilon=0$ based on some preliminary experimental investigations. 
Furthermore, we consider $10M$ fitness evaluations for all algorithms and results presented in Table~\ref{tab:results} and 1M fitness evaluations for the instances in Table~\ref{tab:results2}.
For each setting, each considered algorithm is run on the same set of $30$ randomly generated instances.
 We use the Mann-Whitney test to compute the $p$-value for algorithm pairs to check whether results are statistically significant,  which we assume to be the case if the p-value is at most 0.05.


We first consider results for the instances already investigated in~\cite{NeumannWittGECCO23}. 
We consider the random and degree based instances and results for the examined algorithms are shown in Table~\ref{tab:results}. Results for the GSEMO2D approach developed in \cite{DBLP:conf/ijcai/0001W22} and the GSEMO3D developed in \cite{NeumannWittGECCO23} have already been obtained in ~\cite{NeumannWittGECCO23}. Each run that does not obtain a dominating set gets allocated a fitness value of $10^{10}$. We note that this only applies to GSEMO3D for ca-GrQc and Erdos992 and GSEMO3D. It has already been stated in~\cite{NeumannWittGECCO23} that GSEMO3D has difficulties in obtaining feasible solutions for these graphs. In fact, it never returns a feasible solution for  Erdos992 in both chance constrained settings and only in 1 out of 30 runs for ca-GrQc in both chance constrained settings.
Comparing the results of GSEMO2D and GSEMO3D to our new approaches Fast SW-GSEMO3D and Fast SW-GSEMO3D$_0$, we can see that all approaches behave quite similar for cfat200-1 and cfat200-2. For ca-netscience, there is a slight advantage for our fast sliding window approaches that is statistically significant when compared to GSEMO2D and GSEMO3D, but no real difference on whether the sliding window approach starts with an initial solution chosen uniformly at random or with the search point $0^n$.
Both Fast SW-GSEMO3D and Fast SW-GSEMO3D$_0$ show their real advantage for the larger graphs ca-GrQc and Erdos992 where the 3-objective approach GSEMO3D was unable to produce feasible solutions. On these instance GSEMO2D is clearly outperformed by the sliding window 3-objective approaches.

Results for the instances based on the graphs ca-CSphd, ca-HepPh, ca-AstroPh, ca-CondMat, which consist of 1882, 4158, 6100, 11204, 17903, 21363 nodes are shown in Table~\ref{tab:results2}. Note that the graphs (except ca-CSphd) have more than 10000 nodes and are therefore significantly larger than the ones tested previously. As we are dealing with larger graphs and a smaller fitness evaluation budget of 1M, we also consider the (1+1)~EA approach presented in \cite{DBLP:conf/ijcai/0001W22}. Here each run of the (1+1)~EA tackles each value of $\alpha$ (see Equation~\eqref{eq:weight}) separately with a budget of 1M fitness evaluations, which implies the single-objective approach uses a fitness evaluation budget that is ten times the one of the multi-objective approaches. We observe that  Fast SW-GSEMO3D$_0$ overall produces the best results. For the smallest graph ca-CSphd, there is no significant difference on whether to start with an initial solution chosen uniformly at random or with the search point $0^n$. However, for the larger graphs ca-HepPh, ca-AstroPh, ca-CondMat consisting of more than 10000 nodes, starting with the initial search point $0^n$ in the sliding window approach is crucial for the success of the algorithm. In particular, we can observe that Fast SW-GSEMO3D starting uniformly at random is performing significantly worse than the (1+1) EA and GSEMO2D for the graphs ca-AstroPh, ca-CondMat consisting of 17903 and 21363 nodes, respectively. All observations hold for the uniform as well as the degree-based chance constrained settings.

As mentioned starting with  $0^n$ in our sliding window approach provides a clear benefit when dealing with large graphs. We have already seen in our analysis that the sliding window approach starts at the constraint value of $0$ which gives a partial explanation of its benefit. In order to gain additional insights, we provide in Table~\ref{tab:popsize} the maximum population sizes that the approaches Fast SW-GSEMO3D and  Fast SW-GSEMO3D$_0$ encounter for the graphs ca-CSphd, ca-HepPh, ca-AstroPh, ca-CondMat. We can observe that the maximum population sizes when starting with the search point $0^n$ are significantly smaller than when starting with an initial solution chosen uniformly at random. For the graph ca-CondMat, the average maximum population size among the executed 30 runs for Fast SW-GSEMO3D is almost by a factor of 50 larger than for Fast SW-GSEMO3D$_0$ (5196 vs.\ 107). Given that large populations can significantly slow down the progress of the sliding window approach, we regard the difference in maximum population sizes as a clear explanation why Fast SW-GSEMO3D$_0$ clearly outperforms Fast SW-GSEMO3D on the graphs ca-HepPh, ca-AstroPh, and ca-CondMat.

\section*{Conclusions}
We have shown how to significantly speed and scale up the $3$\nobreakdash-ob\-jec\-tive approach for chance constrained problems introduced in \cite{NeumannWittGECCO23}. We have presented a sliding window approach and shown that it provides with high probability the same theoretical approximation quality as the one given in \cite{NeumannWittGECCO23} but within a significantly smaller fitness evaluation budget. Our experimental investigations show that the new approach is able to deal with chance constrained instances of the dominating set problem with up to 20,000 nodes (within 1M iterations) whereas the previous approach given in \cite{NeumannWittGECCO23} was not able to produce good quality (or even feasible) solutions for already medium size instances of around 4,000 nodes (within 10M iterations).

\noindent\textbf{Acknowledgments.}
This work has been supported by the Australian Research Council (ARC) through grant
FT200100536 and by the Independent Research Fund Denmark through grant DFF-FNU 8021-00260B.

\end{document}